\newcommand{\scrA}{\ensuremath{\mathcal{A}}}
\newcommand{\scrC}{\ensuremath{\mathcal{C}}}
\newcommand{\scrE}{\ensuremath{\mathcal{E}}}
\newcommand{\scrH}{\ensuremath{\mathcal{H}}}
\newcommand{\rA}{\ensuremath{\rightarrow}}
\newcommand{\LrA}{\ensuremath{\Longrightarrow}}
\newtheorem{theorem}{Theorem}
\newtheorem{definition}{Definition}
\newtheorem{remark}{Remark}
\newtheorem{lemma}{Lemma}
\newtheorem{property}{Property}
\begin{document}
%

\title{A unified view of Automata-based algorithms for Frequent Episode Discovery}
%

\numberofauthors{3}
%


\author{
%
%
\alignauthor Avinash Achar\\
       \affaddr{Dept. of Electrical Engg.}\\
       \affaddr{Indian Institute of Science}\\
       \affaddr{Bangalore 500080 India}\\
       \email{avinash@ee.iisc.ernet.in}
\alignauthor
Srivatsan Laxman\\
       \affaddr{Microsoft Research Labs}\\
       \affaddr{Sadashivanagar}\\
       \affaddr{Bangalore 560080 India}\\
       \email{slaxman@microsoft.com}
\alignauthor P. S. Sastry\\
       \affaddr{Dept. of Electrical Engg.}\\
       \affaddr{Indian Institute of Science }\\
       \affaddr{Bangalore 560080 India}\\
       \email{sastry@ee.iisc.ernet.in}
}
\date{}
\maketitle
\begin{abstract}
Frequent Episode Discovery framework is a popular framework in Temporal Data Mining with many applications. 
Over the years many different notions of frequencies of episodes have been proposed along with different algorithms for episode discovery. In this paper we present
a unified view of all such frequency counting algorithms. We present a generic algorithm such that all current algorithms are special cases of it. This unified view
allows one to gain insights into different frequencies and we present quantitative relationships among different frequencies. Our unified view also helps in obtaining
correctness proofs for various algorithms as we show here. We also point out how this unified view helps us to consider generalization of the algorithm so that they
can discover episodes with general partial orders.
\end{abstract}

 \section{Introduction}
\label{sec:introduction}
Temporal data mining is concerned with finitely many useful patterns in sequential (symbolic) data streams \cite{morchen07}.
Frequent episode discovery, first introduced in \cite{MTV97}, is a popular framework  
for mining patterns from sequential
data. The framework has been successfully used in many application domains, e.g., analysis of   
alarm sequences in telecommunication networks \cite{MTV97},  
root cause diagnostics from faults log data in manufacturing \cite{USSL09},  
user-behavior prediction from web interaction logs \cite{LTW08},  
inferring functional connectivity from multi-neuronal spike train data \cite{PSU08}, 
relating financial events and stock trends \cite{NF03},
protein sequence classification \cite{BCSZ06},  
intrusion detection \cite{LB00,WWT08}, text mining \cite{ITN04}, seismic data analysis \cite{MR04}  etc.
The data in this framework is a single long stream of events, where each event  
is described by a symbolic event-type from a finite alphabet and the time of occurrence of the event.  
The patterns of interest are termed episodes.
Informally, an episode   
is a short ordered sequence of event types, and a {\em frequent} episode is one that
occurs often enough in the given data sequence. Discovering frequent episodes is
a good way to unearth temporal correlations in the data.  
Given a user-defined frequency threshold, the task is to efficiently obtain all
frequent episodes in the data sequence.

An important design choice in frequent episode discovery is the definition
of frequency of episodes. Intuitively any frequency should capture the notion  
of the episode occurring many times in the data and,
at the same time, should have an efficient algorithm for computing the same.  
There are many ways to define frequency
and this has given rise to different algorithms for frequent episode discovery
\cite{HC08,ITN04,MT96,MTV97,garriga03,MR04,laxman06}.
In the original framework of \cite{MTV97}, frequency was
defined as the number of fixed-width sliding windows over the data that
contain at least one occurrence of the episode. Another notion for frequency 
 is based on the number of {\em minimal} occurrences \cite{MT96,MTV97}.
Two frequency definitions called {\em head frequency} and {\em total frequency}  
are proposed in \cite{ITN04} in order to overcome some  
limitations of the windows-based frequency of \cite{MTV97}. 
In \cite{laxman06}, two more frequency definitions for episodes were proposed,
based on certain specialized sets of occurrences of episodes in the data.

Many of the algorithms, such as the WINEPI of \cite{MTV97} and the occurrences-based
frequency counting algorithms of \cite{LSU07,LSU05}, employ finite state automata as the  
basic building blocks for recognizing occurrences of episodes in the data sequence. 
An automata-based counting scheme for
minimal occurrences has also been proposed in \cite{DFGGK97}. 



The multiplicity of frequency definitions and the associated algorithms for frequent episode 
discovery makes it difficult to compare the different methods. 
In this paper, we present a unified  view of algorithms   
for frequent episode discovery under all the various frequency definitions 
. We present a generic automata-based algorithm for   
 obtaining frequencies of a set of  episodes and 
show that all the currently available algorithms can be obtained as special cases of this method.  
This viewpoint helps in obtaining useful insights regarding the kinds of occurrences  
tracked by the different algorithms. The framework also aids in deriving proofs  
of correctness for the various counting algorithms, many of  which are not currently   
available in literature. 
Our framework also helps in understanding the anti-monotonicity conditions satisfied by different frequencies which is needed for the candidate generation step. 
Our general view can also help in generalizing current algorithms, which can discover 
only serial or parallel episodes, to the case of episodes with general partial orders 
and we briefly comment on this in our conclusions. 


The paper is organized as follows. Sec.~\ref{sec:overview}  gives an overview of the  
episode framework and explains all the currently used
frequencies in literature. Sec.~\ref{sec:algorithms} presents our generic algorithm and shows that  
 all current  counting techniques for these various frequencies can be 
derived as special cases.  
 Sec.~\ref{sec:proof-of-correctness} gives proofs of correctness for the various  
counting algorithms utilizing this unified framework.
 Sec.~\ref{sec:candgen} discusses the candidate generation step for all these frequencies.
In Sec.~\ref{sec:discussion} we provide some discussion and concluding remarks.

\section{An overview of frequent episode discovery}
\label{sec:overview}

In this section we briefly review the framework of frequent episode
discovery \cite{MTV97}. The data, referred to as an {\em event sequence}, is denoted by $\mathbb{D} = \langle
(E_1,t_1),$ $(E_2, t_2),$ $\ldots (E_n,t_n) \rangle$, where each pair $(E_i,t_i)$ represents an {\em
event}, and the number of events in the event sequence is $n$. Each $E_i$ is a symbol (or {\em
event-type}) from a finite alphabet, $\scrE$, and $t_i$ is a positive integer representing the time of
occurrence of the $i^\mathrm{th}$ event. The sequence is ordered so that, $t_i\leq 
t_{i+1}$ for all $i = 1, 2, \ldots$. The following is an example event sequence with 10 events:
\begin{eqnarray}
(A,1),(A,2),(B,3),(A,6),(A,7),(C,8),(B,9),(D,11),\nonumber\\
(C,12), (A,13),(B,14),(C,15) 
\label{eq:example-sequence}
\end{eqnarray}
An $N$-node episode, $\alpha$, is defined as a triple, $(V_\alpha, \leq_\alpha,g_\alpha)$,  
where $V_\alpha = \{v_1,v_2,\ldots v_N\}$,
is a collection of $N$ nodes,   
$\leq_\alpha$ is a partial order on $V_\alpha$
and $g_\alpha : V_\alpha \rA \scrE$ is a map that associates each node in $\alpha$ with an event
type from $\scrE$. Thus an episode is a (typically small) collection of
event-types along with an associated partial order. When the order $\leq_\alpha$ is total, $\alpha$
is called a serial episode, and when the order is empty, $\alpha$ is called a parallel episode.
In this paper, we restrict our attention to serial episodes\footnote{From now on, we
will simply use 'episode' to refer to a serial episode.}. Without loss of generality, we can now
assume that the total order on the nodes of $\alpha$ is given by $v_1 \leq_{\alpha} v_2 \leq_{\alpha}\ldots  \leq_{\alpha}v_N$.
For example, consider a 3-node episode
$V_\alpha = \{v_1, v_2, v_3\}$, $g_\alpha(v_1)=A$, $g_\alpha(v_2)=B$,
$g_\alpha(v_3)=C$, with 
$v_1\leq_\alpha v_2 \leq_\alpha v_3$. We denote such an  episode by
$(A \rA B \rA C)$. 
An occurrence of episode $\alpha$ in an event sequence $\mathbb{D}$ is a map
$h:V_\alpha \rA \{1,\ldots,n\}$ such that $g_\alpha(v) = E_{h(v)}$ for
all $v \in V_\alpha$, and for all $v,w \in V_\alpha$ with $v<_\alpha
w$ we have $t_{h(v)} < t_{h(w)}$.  In the example event sequence
$(\ref{eq:example-sequence})$, the events $(A,2)$, $(B,3)$ and $(C,8)$
constitute an occurrence of $(A\rA B\rA C)$ while 
$(B,3)$, $(A,7)$  and $(C,8)$ do not. We use  $\alpha[i]$ to refer to the
$i^{th}$ event-type in $\alpha$. This way, an  $N$-node episode
$\alpha$ can be represented using $(\alpha[1] \rA \alpha[2] \rA \ldots \rA \alpha[N])$.
An episode $\beta$ is said
to be a {\em subepisode} of $\alpha$ (denoted $\beta \preceq \alpha$)
if all the event-types in $\beta$
also appear in $\alpha$, and if their order in $\beta$ is same as that
in $\alpha$. For example, $(A\rA C)$ is a 2-node subepisode of the 
episode $(A\rA B\rA C)$ while $(B\rA A)$ is not.

The {\em frequency} of an episode is some measure of how often it occurs in the event sequence. 
 A frequent episode is one whose frequency exceeds a user-defined
threshold. The task in frequent episode discovery is to find all frequent episodes. 

Given an occurrence $h$ of an $N$-node episode $\alpha$, $(t_{h(v_N)} - t_{h(v_1))}$   
is called the {\em span} of the occurrence. In many applications, one may want to consider 
only those occurrences whose span is below some user-chosen limit. (This is because, occurrences 
constituted by events that are widely separated in time may not represent any underlying 
causative influences). We call any such constraint on span as an {\em expiry-time constraint}. 
The constraint is specified by a threshold, $T_X$, such that occurrences of episodes whose span is 
greater than $T_X$ are not considered while counting the frequency.  

One popular approach to frequent episode discovery is to use an Apriori-style level-wise procedure.
At level $k$ of the procedure, a `candidate generation' step combines frequent episodes of size
$(k-1)$ to build candidates (or potential frequent episodes) of size $k$ using some kind of
anti-monotonicity property (e.g.~frequency of an episode cannot exceed frequency of any of its
subepisodes). The second step at level $k$ is called `frequency counting' in which, the algorithm
counts or computes the frequencies of the candidates and determines which of them are frequent.

%

\subsection{Frequencies of episodes}
\label{sec:frequencies-of-episodes}

There are many ways to define the frequency of an episode. Intuitively, any definition
must capture some notion of how often the episode occurs in the data. It  
must also admit an efficient algorithm to obtain the frequencies for a set of episodes. 
Further, to be able to apply a level-wise procedure, we need the frequency definition
to satisfy some anti-monotonicity criterion. 
Additionally,  we would also like the frequency
definition to be conducive to statistical significance analysis.

In this section, we discuss various frequency definitions that have been proposed in  
 literature. (Recall that the data is an event sequence, 
  $\mathbb{D}=\langle(E_1,t_1),\ldots (E_n,t_n)\rangle$).

\begin{definition}
\label{def:windowsbased}
\cite{MTV97} A window on an event sequence, $\mathbb{D}$, is a time interval $[t_s,t_e]$, where $t_s$ and $t_e$ are
positive integers such that $t_s\leq t_n$ and $t_e\geq t_1$.   
The  {\em window width} of $[t_s,t_e]$ is given by $(t_e - t_s)$. Given a user-defined 
window width $T_X$, the {\bf windows-based frequency} of $\alpha$ is the number of 
 windows of width $T_X$ which contain at least one occurrence of $\alpha$.
\end{definition}
For example, in the event sequence (\ref{eq:example-sequence}), there are $5$ windows with window width $5$
which contain an occurrence of $(A\rA B\rA C)$.

\begin{definition}
\cite{MTV97} The time-window of an occurrence, $h$, of $\alpha$ is given by $[t_{h(v_1)}, t_{h(v_N)}]$.
A {\em minimal window} of $\alpha$ is a time-window which contains an occurrence of
$\alpha$, such that no proper sub-window of it contains an occurrence of $\alpha$. An occurrence in a minimal
window is called a minimal occurrence. The {\bf minimal occurrences-based frequency} of $\alpha$ 
in $\mathbb{D}$ (denoted $f_{mi}$) is defined as the number of minimal windows of $\alpha$ in $\mathbb{D}$.
\label{def:minimal}
\end{definition}
In the example sequence (\ref{eq:example-sequence}) there are
3 minimal windows of $(A\rA B\rA C)$: $[2,8]$, $[7,12]$ and $[13,15]$.

\begin{definition}
\cite{ITN04} Given a window-width $k$, the {\bf head frequency} of $\alpha$ is the number of windows of
width $k$ which contain an occurrence of $\alpha$ starting at the left-end of the window and is denoted as $f_{h}(\alpha,k)$. 
\label{def:head}
\end{definition}

\begin{definition}
\cite{ITN04} Given a window width $k$, the {\bf total frequency} of $\alpha$, denoted as $f_{tot}(\alpha,k)$,  is defined as follows.
\begin{eqnarray}
f_{tot}(\alpha,k) & = & \min_{\beta \preceq \alpha} f_{h}(\beta,k)
\label{eq:total-frequency}
\end{eqnarray}
\label{def:total}
\end{definition}
For a window-width of $6$, the head frequency $f_{h}(\gamma,6)$ of $\gamma=(A\rA B\rA C)$  in (\ref{eq:example-sequence}) is $4$.
The total frequency of $\gamma$, $f_{tot}(\gamma,k)$, in (\ref{eq:example-sequence}) is $3$ because the
head frequency of $(B\rA C)$ in (\ref{eq:example-sequence}) is $3$.

\begin{definition}
\cite{LSU05} Two occurrences $h_1$ and $h_2$  of $\alpha$ are said to
be {\em non-overlapped} if either $t_{h_1(v_N)}<t_{h_2(v_1)}$ or $t_{h_2(v_N)}<t_{h_1(v_1)}$. A set of
occurrences is said to be non-overlapped if every pair of occurrences in the set is non-overlapped. A set $H$, of 
non-overlapped occurrences of $\alpha$ in $\mathbb{D}$ is {\em maximal} if $|H|\geq |H'|$, where
$H'$ is any other set of non-overlapped occurrences of $\alpha$ in $\mathbb{D}$. The {\bf non-overlapped 
frequency} of $\alpha$ in $\mathbb{D}$  (denoted as $f_{no}$) is defined as the cardinality of 
a maximal non-overlapped set of occurrences of $\alpha$ in $\mathbb{D}$.
\label{def:nonoverlapped}
\end{definition}

Two occurrences are non-overlapped if no event of
one occurrence appears in between events of the other.
The notion of a maximal non-overlapped set is needed since there can be
many sets of non-overlapped occurrences of an episode
 with different cardinality\cite{laxman06}. The non-overlapped frequency of $\gamma$ in 
(\ref{eq:example-sequence}) is $2$. A maximal set of non-overlapped occurrences is 
 $\langle(A,2),(B,3),(C,8)\rangle$ and
$\langle(A,13),(B,14),(C,15)\rangle$.  

\begin{definition}
\cite{laxman06} Two occurrences $h_1$ and $h_2$ of $\alpha$ are said to be
{\em non-interleaved} if either $t_{h_2(v_j)} \geq t_{h_1(v_{j+1})},\ 
\ j=1,2,\ldots N-1$ or $t_{h_{1}(v_j)} \geq t_{h_2(v_{j+1})},\ 
\ j=1,2,\ldots N-1$. A set 
of occurrences $H$ of $\alpha$ in $\mathbb{D}$ is {\em non-interleaved} if every pair of occurrences
in the set is non-interleaved. A set  $H$ of non-interleaved occurrences of $\alpha$ in $\mathbb{D}$
is {\bf maximal} if $|H|\geq |H'|$, where $H'$ is any other set of non-interleaved occurrences
of $\alpha$ in $\mathbb{D}$. The {\bf non-interleaved frequency} of $\alpha$ in $\mathbb{D}$
(denoted as $f_{ni}$) is defined as the cardinality of a maximal non-interleaved set of occurrences
of $\alpha$ in $\mathbb{D}$.
\label{def:noninterleaved}
\end{definition}

The occurrences $\langle(A,2),(B,3),(C,8) \rangle$ and $\langle (A,7),(B,9)(C,12) \rangle$ are
non-interleaved (though overlapped) occurrences of $(A\rA B\rA C)$ in $\mathbb{D}$.
Together with $\langle(A,13),(B,14),(C,15) \rangle$, these two occurrences  form a set of
maximal non-interleaved occurrences of $(A\rA B\rA C)$ in (\ref{eq:example-sequence}) and thus $f_{ni}=3$.

\begin{definition}
\cite{laxman06} Two occurrences $h_1$ and $h_2$ of $\alpha$ are said to
be {\em distinct} if they do not share any two events. A set of
occurrences is distinct if every pair of occurrences in it is distinct. A set  $H$ of 
distinct occurrences of $\alpha$ in $\mathbb{D}$ is {\bf maximal} if $|H|\geq |H'|$, where $H'$ is
any other set of distinct occurrences of $\alpha$ in $\mathbb{D}$. The {\bf distinct
occurrences-based frequency} of $\alpha$ in $\mathbb{D}$ (denoted as $f_d$) is the
cardinality of a maximal distinct set of occurrences of $\alpha$ in $\mathbb{D}$.
\label{def:distinct}
\end{definition}
The three occurrences that constituted the maximal non-interleaved occurrences of $(A\rA B\rA C)$
in (\ref{eq:example-sequence}) also form a set of maximal distinct occurrences  
in (\ref{eq:example-sequence}).\\

The first frequency proposed in the literature was the windows based count \cite{MTV97}  
and was originally applied for analyzing alarms in a telecommunication network.
It uses an automata based algorithm called WINEPI for counting. Candidate generation exploits the
anti-monotonicity property that all subepisodes are at least as frequent as the parent episode.  
A statistical significance test for frequent episodes based on the windows-based count
was proposed in \cite{GAS03}. There is also an algorithm for discovering frequent episodes
with a maximum-gap constraint under the windows-based count \cite{garriga03}.


The minimal windows based frequency and a 
 level-wise procedure called MINEPI to track minimal windows were also proposed in \cite{MTV97}.
This algorithm 
 has  high space complexity since the exact locations of all the minimal windows of
the various episodes are kept in memory. Nevertheless, it is useful in rule generation.
An efficient automata-based scheme for counting the number of minimal windows (along with a proof of correctness)
was proposed in \cite{DFGGK97}. The problem of statistical significance of minimal windows
was recently addressed in \cite{tatti09}. An algorithm for extracting rules under a maximal gap constraint and based on minimal occurrences has been proposed in
\cite{MR04}.

In the windows-based frequency, 
 the window width is essentially an expiry-time constraint (an upper-bound on the  
 span of the episodes).  However, if the span of 
an occurrence is much smaller than the window width, then its frequency is artificially inflated 
 because the same occurrence will be found in several successive
sliding windows.  The head
frequency measure, proposed in \cite{ITN04}, is a variant of the windows-based count intended to overcome this problem.
Based on the notion of head frequency,  \cite{HC08} presents two algorithms MINEPI+ and EMMA. 
  They also point out how head frequency can be a better
choice for rule generation compared to the windows-based or the minimal windows-based counts. Under
the head frequency count, however, there can be episodes whose frequency is higher than 
 some of their subepisodes (see \cite{ITN04} for details). To circumvent this, \cite{ITN04} propose the idea of
total frequency. Currently, there is no statistical
significance analysis based on head frequency or total frequency.

An efficient automata-based counting algorithm under the non-overlapped frequency measure (along
with a proof of correctness) can be found in \cite{LSU07}. 
 A statistical significance test for the same is proposed in  \cite{LSU05}. 
 However, the algorithm in \cite{LSU07} 
 does not handle any expiry-time constraints. 
 An efficient automata-based algorithm for 
counting non-overlapped occurrences under expiry-time constraint 
 was proposed in \cite{LSU05,laxman06} though this has higher time and space complexity than 
 the algorithm in \cite{LSU07}. No proofs of
correctness or statistical
significance analysis are available for non-overlapped occurrences under an expiry-time constraint.
Algorithms for frequent episode discovery under the non-interleaved frequency  can be found in \cite{laxman06}. 
No proofs of correctness are available for these algorithms. 

Another frequency measure
we discuss in this paper is based on the idea of distinct occurrences. No algorithms are
available for counting frequencies under this measure. The unified view of automata-based
counting that we will present in this paper can be readily used to design  
algorithms for counting distinct
occurrences of episodes.

\section{Unified View of all the Automata based Algorithms}
\label{sec:algorithms}

In this section, we present a generic algorithm for obtaining frequencies of episodes under the
different frequency definitions listed in Sec.~\ref{sec:frequencies-of-episodes}. The
basic ingredient in all the algorithms is a simple Finite State Automaton (FSA) that
is used to recognize (or track) an episode's occurrences in the event sequence. 

 The FSA for recognizing occurrences of $(A\rA B\rA C)$ is illustrated in
Fig.~\ref{fig:automaton}. In general,  an FSA for an $N$-node serial episode
$\alpha=\alpha[1]\rA\alpha[2]\rA\dots\rA\alpha[N]$ has $(N+1)$ states. The first $N$ states are 
represented by a pair $(i,\alpha[i+1])$, $i=0,\dots N-1$. The $(N+1)^{th}$ state is  
$(N, \phi)$ where $\phi$ is a null symbol. Intuitively, if the FSA is in 
 state $(j, \alpha[j+1])$, it means that the FSA has already seen the first $j$ event types of 
this episode and is now waiting for $\alpha[j+1]$; if we now encounter an event of type 
 $\alpha[j+1]$ in the data it can accept it (that is, it can transit to its next state).   
  The start (first) state of the FSA is 
$(0,\alpha[1])$. The $(N+1)^{th}$ state is the accepting  state because   
when an automaton reaches this state, a full occurrence of the episode is tracked.

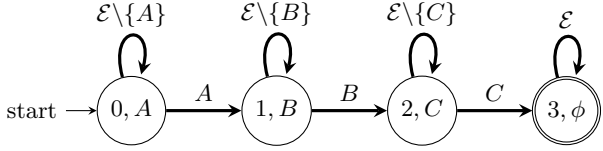
\begin{figure}
\begin{tikzpicture}[>= stealth, node distance=1cm,auto]

\node[state,initial] (0) {$0,A$};
\node[state]         (1) [right=of 0] {$1,B$};
\node[state]         (2) [right=of 1] {$2,C$};
\node[state,accepting]         (3) [right=of 2] {$3,\phi$};

\path[->,very thick] 
(0) edge node {$A$} (1)
edge [loop above] node { $\scrE$\textbackslash$\{A\}$ } ()
(1) edge node {$B$} (2)
edge [loop above] node { $\scrE$\textbackslash$\{B\}$ } ()
(2) edge node {$C$} (3)
edge [loop above] node { $\scrE$\textbackslash$\{C\}$ } ()
(3) edge [loop above] node { $\scrE$ } ();

\end{tikzpicture}
\caption{Automaton for tracking occurrences of $\alpha=(A\rA B\rA C)$}
\label{fig:automaton}
\end{figure}

We first explain how these FSA can be used for obtaining all the different types 
of frequencies of episodes before presenting the generic algorithm. 
While discussing various algorithms, we represent any occurrence $h$ 
 by $[t_{h(v_1)},t_{h(v_2)}\ldots t_{h(v_N)}]$, which is 
 the vector of times of the events that constitute the occurrence.  
For the discussion of all algorithms in this section, we consider the example of tracking 
 occurrences of $\alpha = (A\rA B\rA C\rA D)$ in the data stream $\mathbb{D}_1$ given by  

\begin{eqnarray*}
\mathbb{D}_1 & = & (A,1)(B,3)(A,4)(A,5)(C,7)(B,9)(C,11)(A,14)\\
& & (D,15)(C,16)(B,17)(D,18)(A,19)(C,20)(B,21)\\ 
& & (A,22)(D,23)(B,24)(C,25)(D,29)(C,30)(D,31)
\end{eqnarray*}

There  is a `natural' lexicographic order on the set of
{\bf all} occurrences $\scrH$, of any episode, $\alpha$, defined below. 
This is a total order on $\scrH$ and it will be useful in our analysis. 

\begin{definition}
The lexicographic ordering, $<_{\star}$, on the set
of all occurrences of $\alpha$ is defined as follows: $h_1<_\star
h_2$ if the least $i$ for which $t_{h_1(v_i)}\neq t_{h_2(v_i)}$ is
such that $t_{h_1(v_i)}< t_{h_2(v_i)}$.
\end{definition}

The simplest of all automata-based frequency counting algorithms is the one  
 for counting non-overlapped occurrences \cite{LSU07} which 
uses only $1$-automata per episode. (We call it algorithm NO here). 
 At the start, one automaton for each of the candidate episodes is 
initialized in its start state. Each of the automata make a state transition as soon as a relevant 
 event-type appears in the data stream. Whenever an
automaton reaches its final state, frequency of the corresponding episode is incremented, the
automaton is removed from the system and a fresh automaton for the episode is initialized in the
start state. As is easy to see, this method will count non-overlapped occurrences of episodes.
Under the NO algorithm, we 
denote the occurrence tracked by the $i^{th}$ automaton initialized for $\alpha$ as $h_i^{no}$.

 In our example, algorithm  NO tracks the following two occurrences of the
episode $\alpha$: (i) $h_1^{no}=[1\,3\,7\,15]$ and (ii) $h_2^{no}=[19\, 21\, 25\, 29]$,
and the corresponding non-overlapped frequency is $2$.

In this paper we introduce the concept of {\em earliest transiting} occurrence 
of an episode which is useful for analyzing different frequency counting algorithms. 
\begin{definition}
\em An occurrence $h$ of $\alpha$ is called earliest transiting
if  $E_{h(v_i)}$ is the first occurrence of $\alpha[i]$ after
$t_{h(v_{i-1})}$ $\forall i=2,3\ldots N$.
\label{def:earliest transiting}
\end{definition}

It is easy to see that all occurrences tracked by algorithm NO are earliest transiting. 
Let $\scrH^{e}$ denote the set of all earliest transiting occurrences of a given episode. 
 We denote the $i^{th}$ occurrence (as per the lexicographic ordering of occurrences) in 
$\scrH^{e}$  as $h_i^{e}$.
There are 6 earliest transiting occurrences of $\alpha$ in $\mathbb{D}_1$.  
They are $h_1^{e}=[1\, 3\, 7\, 15]$,
$h_2^{e}=[4\, 9\, 11\, 15]$,
$h_3^{e}=[5\, 9\, 11\, 15]$, $h_4^{e}=[14\, 17\, 20\, 23]$, $h_5^{e}=[19\, 21\, 25\, 29]$ and $h_6^e=[22\, 24\, 25\, 29]$. 
 The earliest transiting occurrences tracked by the NO algorithm are  
$h_1^{no}=h_1^{e}$ and $h_2^{no}=h_5^{e}$.

While the algorithm NO is very simple and efficient, it can not handle any expiry-time constraint. 
 Recall that the expiry-time  
constraint specifies an upper-bound, $T_X$, on the span of any occurrence that is counted.  
Suppose we want to count  with  $T_X = 9$. Both the occurrences tracked by 
NO have spans greater than $9$ and hence the resulting frequency count
would be zero. However, $h_4^e$ is an occurrence which satisfies the expiry time
constraint. Algorithm NO can not track $h_4^e$ because it uses only one automaton per episode 
 and the automaton has to make a state transition as soon as the relevant event-type appears in 
 the data. To overcome this limitation, the algorithm can be modified so that a new
automaton is initialized in the start state, whenever an existing automaton moves out of its start
state. All automata make state transitions as soon as they are possible.
 Each such automaton would track an earliest transiting occurrence. 
In this process, two automata may reach the same state. In our example, after seeing $(A,5)$,
the second and third automata to be initialized for $\alpha$, would be waiting in the same
state (ready to accept the next $B$ in the data). Clearly, both automata will make
state transitions on the same events from now on and so we need to keep only one of them. 
We retain the newer or most recently initialized automaton (in this case, the third automaton)  
 since the span of the occurrence tracked by it would be smaller. 
  When an automaton reaches its final state,  if the
span of the occurrence tracked by it is less than $T_X$, then the corresponding frequency is
incremented and all automata of the episode except the one waiting in  the start state are retired. (This
ensures we are tracking only non-overlapped occurrences). When the occurrence tracked by  
the automaton that reaches the final state  
 fails the expiry constraint, we just retire the current automaton; any other
automata  for the episode will continue to accept events. Under this modified algorithm,  
in $\mathbb{D}_1$, the first automaton that reaches its final state tracks $h_3^e$ 
 which violates the expiry time constraint of $T_X=9$.
So, we drop only this automaton. The next automaton that reaches its final state tracks
$h_4^e$. This occurrence has span less than $T_X=9$. Hence we increment the corresponding
frequency count and retire all current automata for this episode.  
Since there are no other occurrences non-overlapped with $h_4^e$, the 
 final frequency would be 1.  We denote this algorithm for counting 
the non-overlapped occurrences under an expiry-time constraint as NO-X.
The occurrences tracked by both NO and NO-X would be earliest transiting. 

Note that several earliest transiting occurrences may end simultaneously. For example, in
$\mathbb{D}_1$, $h_1^{e}$, $h_2^{e}$ and $h_3^{e}$ all end together at $(D,15)$.  
Both $\{h_2^{e},h_5^{e}\}$ and 
$\{h_3^{e},h_6^{e}\}$ form maximal sets of non-overlapped occurrences. Sometimes (e.g. when
determining the distribution of spans of occurrences for an episode) we would like to
track the {\em innermost} one among the occurrences that are ending together. In this example, 
 this means we want to track the set of occurrences $\{h_3^{e},h_6^{e}\}$. 
 This can be done by simply omitting the
expiry-time check in the NO-X algorithm. (That is, whenever an automaton reaches final state, 
 irrespective of the span of the occurrence tracked by it, we 
increment frequency and retire all other automata except for the one in start state). 
 We denote this as the NO-I algorithm and this 
 is the algorithm proposed in \cite{LSU05}.

In NO-I, if we only retire automata that reached their final states (rather than retire all
automata except the one in the start state), we have an algorithm for counting minimal occurrences
(denoted MO). In our example, the automata tracking $h_3^{e}$, $h_4^{e}$
and $h_6^{e}$ are the ones that reach their final states in this algorithm. The
time-windows of these occurrences constitute the set of all minimal windows of $\alpha$ in
$\mathbb{D}_1$.  Expiry time constraints can
be incorporated by incrementing frequency only  when the occurrence tracked has span less than the
expiry-time threshold. The corresponding expiry-time algorithm is referred to as MO-X.

The windows-based counting algorithm (which we refer to as WB)  
 is also based on tracking earliest transiting
occurrences. WB also uses multiple automata per episode to
track minimal occurrences of episodes like in MO.
The only difference lies in the way frequency is incremented. 
The algorithm essentially remembers, for each candidate episode, the last minimal window in
which the candidate was observed. Then, at each time tick, effectively,  
if this last minimal window lies
within the current sliding window of width $T_X$, frequency is incremented by one. This is because, an occurrence of episode $\alpha$ exists in a given window $w$ if
and only $w$ contains  a minimal window of $\alpha$. 

It is easy to see that head frequency with a window-width of $T_X$ is simply the number of
earliest transiting occurrences whose span is less than $T_X$.   Thus we can have a  
 head frequency counting algorithm
(referred to here as HD)  that is similar to MO-X except
that when two automata reach the same state simultaneously we do not remove the older automaton. 
This way, HD will track all earliest transiting occurrences which
satisfy an expiry time-constraint of $T_X$. For $T_X=10$ and for episode $\alpha$,  
HD tracks $h_3^e$, $h_4^e$, $h_5^e$ and $h_6^e$ 
and returns a frequency count of $4$. The total frequency count for an episode $\alpha$
is the minimum of the head frequencies of all its subepisodes (including itself). This can be
computed as the minimum of the head frequency of $\alpha$ and the total frequency of its $(N-1)$-suffix subepisodes which would have been computed in the previous 
 pass over the data. (See \cite{ITN04} for details).  
The head frequency counting algorithm can have high space-complexity as all the time instants at
which automata make their first state transition need to be remembered.

The non-interleaved frequency counting algorithm (which we refer to as NI) differs  
from the minimal occurrence algorithm
in that, an automaton makes a state transition only if 
there is no other automaton of the
same episode in the destination state. Unlike the other frequency counting algorithms discussed so far,  
such an FSA transition policy
will track occurrences which are not necessarily earliest transiting.
In our example, until the event $(A,4)$ in the data sequence, both the minimal and non-interleaved
algorithms make identical state transitions. However, 
on $(A,5)$, NI will not allow the automaton in state $(0,A)$ to make a state transition as there is
already an active automaton for $\alpha$
in state $(1,B)$ which had accepted $(A,4)$ earlier.
Eventually, NI tracks the occurrences $h_1^{ni}=[1\, 3\, 7\, 15]$, $h_2^{ni}=[4\,9\, 16\,18]$,
$h_3^{ni}= [14\, 17\, 20\, 23]$ and $h_4^{ni}=[19\, 21\, 25\, 29]$. 

 While there are no algorithms reported for counting distinct occurrences, we can construct  
 one using the same ideas. Such an algorithm (to be called as DO) differs  
from the one for counting minimal
occurrences, in allowing multiple automata for an episode to reach the same state. However,
on seeing an event $(E_i,t_i)$ which multiple automata can accept, only one of the automata (the
oldest among those in the same state) is allowed to make a state transition; the others continue to
wait for future events with the same event-type as $E_i$ to make their state transitions. 
The set of maximal distinct occurrences of $\alpha$ in $\mathbb{D}_1$ are
$h_1^d=h_1^e$, $h_2^d=[4\, 9\, 11\, 18]$, $h_3^d=[5\, 17\, 20\, 23]$,  
$h_4^d=[14\, 21\, 25\, 29]$ and $h_5^d=[19\, 24\, 30 \,31]$ which are the ones tracked by this algorithm. 

We can also consider counting {\em all} occurrences of an episode even though it may be 
 inefficient.  
The algorithm for counting {\em all} occurrences 
(referred to as the AO) allows all automata to make transitions whenever the appropriate
events appear in the data sequence. However, at each state transition, a copy of the automaton in
the earlier state is added to the set of active automata for the episode. 

From the above discussion, it is clear that by manipulating the FSA (that recognize occurrences) in 
different ways we get counting schemes for different frequencies. The choices to be made in different 
algorithms essentially concern when to initiate a new automaton in the start state, when
to retire an existing automaton, when to effect a possible state transition and  
when (and by how much) to increment the frequency.  
We now present a unified scheme incorporating all this in {\em Algorithm~\ref{algo:unified}} for  
obtaining frequencies of a set of serial episodes. This algorithm has five boolean variables, namely, 
TRANSIT, COPY-AUTOMATON, JOIN-AUTOMATON, INCREMENT-FREQ and RETIRE-AUTOMATON. The counting algorithms 
 for all the different frequencies are obtained from this general algorithm by suitably setting the 
values of these boolean variables (either by some constants or by values calculated using the current 
context in the algorithm). Tables~\ref{tab:transit} -- \ref{tab:retire-automata} specify the choices 
needed to obtain the algorithms for different frequencies. (A list of all   
  algorithms is given in table~\ref{tab:various-counts}). 

As can be seen from our general algorithm, when an event type for which an automaton is waiting 
is encountered in the data, the the automaton can accept it only if the 
variable TRANSIT is true. Hence for all algorithms that track earliest transiting occurrences, 
TRANSIT will be set to true as can be seen from table~\ref{tab:transit}. For algorithms 
NI and DO where we allow the state transition only if some condition is satisfied.  
The condition COPY-AUTOMATON (Table~\ref{tab:copy-automaton}) is for deciding whether or not 
 to leave another automaton in the current state when an automaton is transiting to the next state. 
 Except for NO
and AO, we create such a copy only when the currently transiting automaton
is moving out of its start state. In NO we never make such a copy (because this algorithm 
uses only one automaton per episode) while in AO we need to do it for every state transition. 
As we have seen earlier, in some of the algorithms, when two automata for an episode reach  
the same state, the older automaton is removed. This is controlled by 
JOIN-AUTOMATON, as given by Table~\ref{tab:join-automaton}. 
INCREMENT-FREQUENCY (Table~\ref{tab:increment-freq}) is the condition under
which the frequency of an episode is incremented when an automaton reaches its final state. 
This increment is always done for algorithms that have no expiry time constraint or window width. 
For the others we increment the frequency only if the occurrence tracked satisfies the constraint. 
RETIRE-AUTOMATA
condition (Table~\ref{tab:retire-automata}) is concerned with removal of all automata of an
episode when a complete occurrence has been tracked.
This condition is true only for the non-overlapped occurrences-based counting algorithms.

Apart from the five boolean variables explained above, our general algorithm contains one 
more variable, namely, INC, which decides the amount by which frequency is incremented when 
an automaton reaches the final state. Its values for different frequency counts are listed 
in~Table~\ref{tab:inc}. 
For all algorithms except WB, we set $INC = 1$. 
 We now explain how frequency is incremented in WB. To count the number of
sliding windows that contain at least one occurrence of the episode, whenever
a new minimal occurrence enters a sliding window, we can calculate the number of consecutive windows
in which this new minimal occurrence will be found in. For example, in $\mathbb{D}_1$,
with a window-width of $T_X=16$, consider
the first minimal occurrence of $(A\rA B\rA C\rA D)$, namely, the occurrence constituted by events
$(A,5)$, $(B,9)$, $(C,11)$ and $(D,15)$. The first sliding window in which this occurrence can be found is $[-1,15]$. The
occurrence stays in consecutive sliding windows, until the sliding window $[5,21]$. When this first
minimal occurrence enters the sliding window $[-1,15]$, we observe that there is no other `older' minimal occurrence
in $[-1,15]$, and hence, as per the {\em else} condition  in {\em Table 
\ref{tab:inc}}, the $INC$ is incremented by $(5-(-1)+1)=7$.  
Similarly, when the second minimal occurrence enters the sliding window $[7,23]$, we increment $INC$ by $(14-7+1=8)$.
The third minimal occurrence (constituted by the events $(A,22)$,
$(B,24)$, $(C,25)$ and $(D,29)$) first enters the sliding window $[13,29]$, with the second minimal window still
occurring within this window. This third minimal occurrence remains in consecutive sliding windows
until $[22,38]$. As per the {\em if} condition of {\em Table 
\ref{tab:inc}}, $INC$ is incremented by $22-14=8$. We note that such an implementation of the windows-based algorithm
removes the need for the $beginsat(t)$ list of \cite{MTV97} which was used to store all automata whose first
state transition occurred at time-tick $t$.

\begin{algorithm}
\caption{Unified Algorithm for counting serial episodes}
\label{algo:unified}
\begin{algorithmic}[1]
\REQUIRE Set $\scrC_N$ of $N$-node serial episodes, event stream $\mathbb{D} =
\langle (E_1,t_1)$, $\ldots$, $(E_n,t_n)) \rangle$, 

\ENSURE Frequencies of episodes in $\scrC_N$

\FORALL{$\alpha\in\scrC_N$}
	\STATE Add  automaton of $\alpha$ waiting in the start state.
	\STATE Initialize frequency of $\alpha$ to ZERO.
\ENDFOR

\FOR{$l=1$ to $n$}
	\FOR{each automaton, $\scrA$, ready to accept event-type $E_i$}
		\STATE $\alpha$=candidate associated with $\scrA$;
		\STATE $j$ = state which $\scrA$ is ready to transit into;
		\IF{TRANSIT}
			\IF{COPYAUTOMATON}
				\STATE Add Copy of $\scrA$ to collection of automata.
			\ENDIF
			\STATE Transit $\scrA$ to state $j$
			\IF{$\exists$ an earlier automaton of $\alpha$ already in state $j$ but not waiting for $E_i$}
				\IF{JOIN-AUTOMATON}
					\STATE Retain $\scrA$ and retire earlier automaton
				\ENDIF
			\ENDIF
			\IF{$\scrA$ reached final state}
				\STATE Retire $\scrA$.
				\IF{INCREMENT-FREQ}
					\STATE Increment frequency of $\alpha$ by INC.
					\IF{RETIRE-AUTOMATON}
						\STATE Retire all automaton of $\alpha$ and create a state '0' automaton.
					\ENDIF
				\ENDIF
			\ENDIF
		\ENDIF
	\ENDFOR
\ENDFOR
\end{algorithmic}
\end{algorithm}
\begin{table}[] \centering 
\caption{Various frequency counts}
\label{tab:various-counts}
\begin{tabular}{|c|c|} \hline
WB & Windows based \\ \hline
MO & Minimal Occurrences based \\ \hline
MO-X & Minimal Occurrence with Expiry time constraints\\ \hline
NO & Non-overlapped \\ \hline
NO-I & Non-overlapped innermost \\ \hline
NO-X & Non-overlapped with Expiry time constraints \\ \hline
NI & Non-interleaved \\ \hline
DO & Distinct occurrences based \\ \hline
AO & All occurrences based \\ \hline
HD & Head frequency \\ \hline
\end{tabular}

\caption{Conditions for TRANSIT=TRUE}
\label{tab:transit}
\begin{tabular}{|c|c|} \hline
WB, MO, MO-X, HD  & Always \\ 
NO, NO-X, NO-I AO  & \\ \hline
NI & If $\nexists$ earlier  automaton for $\alpha$\\ 
&  in next state $j$ \\ \hline
DO & No other earlier automaton for $\alpha$ \\
& waiting in same state can\\
& transit on event $(E_i,t_i)$. \\ \hline
\end{tabular}

\caption{Conditions for COPY-AUTOMATON=TRUE}
\label{tab:copy-automaton}
\begin{tabular}{|c|c|} \hline
WB, MO, MO-X, HD  & Only if $\scrA$ \\ 
NI, NO-X, NO-I, DO  & is in start state \\ \hline
NO & Never \\ \hline
AO & Always \\ \hline
\end{tabular}

\caption{Conditions for JOIN-AUTOMATON=TRUE}
\label{tab:join-automaton}
\begin{tabular}{|c|c|} \hline
WB, MO, MO-X,  & Always \\ 
NO-X, NO-I &  \\ \hline
DO, AO, HD, NO, NI & Never\\ \hline
\end{tabular}

\caption{Conditions for INCREMENT-FREQ=TRUE}
\label{tab:increment-freq}
\begin{tabular}{|c|c|} \hline
MO, NO, NI,   & Always \\ 
DO, AO, NO-I &  \\ \hline
WB, NO-X & If time difference between \\ 
MO-X, HD & first and last state transitions \\ 
& is less than $T_X$(window-width for \\
& WB, expiry time for others) \\ \hline
\end{tabular}

\caption{Conditions for RETIRE-AUTOMATA=TRUE}
\label{tab:retire-automata}
\begin{tabular}{|c|c|} \hline
NO, NO-X, NO-I   & Always \\ \hline
WB, MO, MO-X  & Never  \\ 
HD, NI, DO, AO & \\ 
MO-X & \\ \hline
\end{tabular}

\caption{Values taken by INC}
\label{tab:inc}
\begin{tabular}{|l|} \hline
INC = 1 for all counts except WB.\\
For Windows Based count(WB),\\
If(first window which contains current minimal \\
occurrence also contains the previous minimal\\
occurrence), then \\
INC = Time diff. between start of last window containing \\
 the current minimal occurrence and the start of last \\
window which contains previous minimal occurrence.\\
else\\
INC=time difference between the first and last window \\
containing the current occurrence $+ 1$.\\ \hline
\end{tabular}
\end{table}

\begin{remark}
\label{Remark:3-1} Even though we included AO (for counting all occurrences 
of an episode) for sake of completeness, this is not a good frequency measure. 
This is mainly because it does not seem to satisfy 
any anti-monotonicity condition. For example, consider the data sequence $<AABBCC>$.  There
are $8$ occurrences of $(A \rA B\rA C)$ but only 4 occurrences of each of its $2$-node subepisodes. 
Also, its space complexity can be high.  
\end{remark}

\begin{remark}
\label{Remark:3.2}: 
The quantitative relationships between the different frequency counts for a given episode can
be described as follows:
\begin{equation}
f_{all}\geq f_{h}\geq f_{tot}\geq f_{d}\geq f_{ni}\geq f_{mi}\geq f_{no}
\label{eq:all-f-relationships}
\end{equation}
where $f_{all}$ denotes the frequency of an episode under AO, while $f_h$ and $f_{tot}$ denote the
corresponding head and total frequencies defined with a window-width exceeding the total time-span
of the event sequence. 
For a large sliding window width, the head frequency $f_h$ is
same as the number of earliest transiting occurrences of an episode. In general, the inequality $f_d\geq f_{ni}$ holds only for injective episodes (An
episode $\alpha$ is injective if it does not contain any repeated event-types). All other inequalities are true for any serial episode.
The first inequality is obvious. The second inequality follows directly from {\em equation \ref{eq:total-frequency}} in {\em
definition \ref{def:total}}. Given a set of $f$ maximal distinct occurrences of an episode $\alpha$ in a data stream $\mathbb{D}$, one can 
extract that many earliest transiting occurrences of not only $\alpha$ but also of all its subepisodes in
$\mathbb{D}$. Hence the third inequality is also true. 
Also, it
is easy to verify that a set of non-interleaved occurrences of an injective episode are also distinct, which validates the fourth 
inequality. We will
show the correctness of the remaining two inequalities in the next section. 
\end{remark}

\section{Proofs of correctness}
\label{sec:proof-of-correctness}

In this section, we present proofs of correctness of the different frequency counting algorithms
presented in Sec.~\ref{sec:algorithms} (all of which are specific instances of {\em Algorithm \ref{algo:unified}}). 


In our proofs, we consider the case of event sequences with distinct occurrence-times for events. 
When we are not considering expiry-time constraints, the actual values of times of occurrences of 
different events are not really important; only the time ordering of  
the events is important in deciding 
 on the occurrences of episodes. Hence, in this section we will use $h(v_1)$  
 interchangeably with $t_{h(v_1)}$,  the 
 time of the first event in the occurrence $h$ and so on.   
Modifications needed in the case of data having multiple events with the same  
time of occurrence,  are
discussed at the end of the section.

\subsection{Minimal Window Counting algorithm}
\label{subsec:proof-minimal}

First, we analyze the minimal occurrences counting algorithm (MO). Our proof
methodology is different from the one presented in \cite{DFGGK97}, where, the algorithm is 
viewed as computing a table $S[0\ldots n,0\ldots N]$, where, $S[i,j]$
is the largest value $k\leq i$ such that $E_k\ldots E_i$ contains an occurrence of
$\alpha[1]\rA\ldots \alpha[j]$, using dynamic programming.  
 The algorithm, after processing $E_i$, stores the $i^{th}$ row of this matrix.
The dynamic programming recursion helps compute the $i^{th}$ row of this matrix   
from its $(i-1)^{th}$ row. Whenever
$S[i,N]>S[i-1,N]$, the count is incremented since a new minimal occurrence is recognized. Viewed
from an automata perspective, the $i^{th}$ row of the matrix essentially stores the first state transition times
of the currently active automata. Our analysis of the minimal occurrence algorithm also 
leads to an analysis and proof for counting non-overlapped occurrences (NO and NO-X) as well.  
Another advantage of our proof strategy is that it may be generalized to the case of episodes with
general partial orders. (We briefly discuss this
 in section~\ref{sec:discussion}). 


\begin{lemma}
\label{lemma:power}
Suppose $h$ is an earliest transiting occurrence of an $N$-node episode $\alpha$. If $h'$ is any general occurrence such that  $h <_\star h'$, then $h(v_i)\leq h'(v_i)$ $\forall i\,=\,1,2,\ldots N$.
\end{lemma}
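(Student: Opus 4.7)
The plan is to induct on the coordinate index $i$, using the position $k$ at which $h$ and $h'$ first disagree as the base case. Let $k\in\{1,\dots,N\}$ be the least index with $h(v_k)\neq h'(v_k)$; by the definition of $<_\star$, the hypothesis $h<_\star h'$ gives $h(v_k)<h'(v_k)$, and for every $i<k$ we have $h(v_i)=h'(v_i)$. So the inequality $h(v_i)\le h'(v_i)$ is immediate for all $i\le k$, and what remains is the case $i>k$.

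For the inductive step, assume $h(v_{i-1})\le h'(v_{i-1})$ for some $i$ with $k<i\le N$. Because $h'$ is itself an occurrence of $\alpha$, the event at position $h'(v_i)$ has event-type $\alpha[i]$ and satisfies $t_{h'(v_i)}>t_{h'(v_{i-1})}$; combined with the inductive hypothesis this yields $t_{h'(v_i)}>t_{h(v_{i-1})}$. Now invoke the earliest transiting property of $h$: by Definition~\ref{def:earliest transiting}, $E_{h(v_i)}$ is the \emph{first} event of type $\alpha[i]$ occurring strictly after time $t_{h(v_{i-1})}$. Since $h'(v_i)$ is \emph{some} event of type $\alpha[i]$ occurring strictly after $t_{h(v_{i-1})}$, minimality forces $h(v_i)\le h'(v_i)$, completing the induction.

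The only subtle point, which I would flag explicitly, is the boundary case $k=1$: the earliest transiting definition places no constraint on $v_1$, but this is harmless because $h(v_1)<h'(v_1)$ is supplied directly by the lexicographic hypothesis, so the induction kicks in cleanly at $i=2$. A second small care-point, to be dealt with once in a preliminary sentence, is that since we have assumed distinct event times in this section, ``first occurrence after $t_{h(v_{i-1})}$'' is unambiguous and the comparison between $h(v_i)$ and $h'(v_i)$ as event-indices is equivalent to comparison of their times, so no tie-breaking is needed.

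I do not expect a genuine obstacle here; the main thing to get right is the bookkeeping that the chain of inequalities travels through $h'(v_{i-1})$ (used only to show that $h'(v_i)$ occurs after $t_{h(v_{i-1})}$) rather than trying to compare $h(v_i)$ and $h'(v_i)$ directly. The result will then be used in later subsections to justify the correctness of the non-overlapped and related counting algorithms, so I will state it cleanly as ``earliest transiting occurrences dominate, coordinatewise, every lexicographically later occurrence''.
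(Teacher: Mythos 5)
Your proof is correct and follows exactly the route the paper intends: the paper gives no written-out argument, stating only that the lemma ``follows easily from the definition of the lexicographic ordering and the definition of earliest transiting occurrence,'' and your induction (lexicographic hypothesis for the first differing coordinate, then the earliest-transiting minimality to push the inequality forward through $t_{h'(v_i)}>t_{h'(v_{i-1})}\geq t_{h(v_{i-1})}$) is precisely the easy argument being alluded to. No gaps; the care you take about event indices versus times and the $k=1$ boundary is appropriate but does not change the substance.
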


This lemma follows easily from the definition of the lexicographic ordering, $<_{\star}$, 
and the definition of earliest transiting occurrence. 

\begin{remark}
\label{remark:4-1}
Recall that $h_i^e$ is the $i^{th}$ earliest transiting (ET)
occurrence of an episode. Thus, by definition,  
 $h_i^e(v_1) < h_j^e(v_1)$ and $h_i^e <_{\star} h_j^e$ whenever $i < j$. 
Hence, from the above lemma, we have $h_i^e(v_k) \leq h_j^e(v_k)$ for all $k$ and $i < j$. 
In particular, we have, $h_i^e(v_1) < h_{i+1}^e(v_1)$ and 
 $h_i^e(v_N)\leq h_{i+1}^e(v_N)$, for an $N$-node episode.
\end{remark}

%

The main idea of our proof is that to find 
all minimal windows of an episode, it is enough to capture a certain subset of earliest
transiting occurrences. 


\begin{lemma}
\label{lemma:minimal_et}
 An earliest transiting (ET) occurrence $h_i^e$, of an $N$-node episode,  
 is not a minimal occurrence  if and only if 
 $h_i^e(v_N)=h_{i+1}^e(v_N)$.
\end{lemma}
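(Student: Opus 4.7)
The plan is to prove the lemma by analyzing each direction separately and leveraging Remark~\ref{remark:4-1} and a small auxiliary dominance property of ET occurrences.

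For the easy direction ($\Leftarrow$), suppose $h_i^e(v_N)=h_{i+1}^e(v_N)$. Remark~\ref{remark:4-1} already gives $h_{i+1}^e(v_1)>h_i^e(v_1)$, so the time window $[h_{i+1}^e(v_1),h_{i+1}^e(v_N)]$ is a proper sub-window of $[h_i^e(v_1),h_i^e(v_N)]$ that nonetheless contains an occurrence of $\alpha$ (namely $h_{i+1}^e$). Hence $h_i^e$ fails the minimality condition.

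For the harder direction ($\Rightarrow$), I will first record a small sub-claim: if $\hat{h}$ is an ET occurrence and $h'$ is any occurrence with $h'(v_1)=\hat{h}(v_1)$, then $\hat{h}(v_k)\le h'(v_k)$ for all $k$. This follows by a one-line induction: given $\hat{h}(v_{k-1})\le h'(v_{k-1})$, the event $E_{h'(v_k)}=\alpha[k]$ lies strictly after $h'(v_{k-1})\ge \hat{h}(v_{k-1})$, and $\hat{h}(v_k)$ is \emph{the first} occurrence of $\alpha[k]$ after $\hat{h}(v_{k-1})$, so $\hat{h}(v_k)\le h'(v_k)$.

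Now assume $h_i^e$ is not minimal. Then its time window contains a proper sub-window that harbors some occurrence $h'$ of $\alpha$, so $h'(v_1)\ge h_i^e(v_1)$, $h'(v_N)\le h_i^e(v_N)$, with at least one inequality strict. The sub-claim applied to $h_i^e$ rules out $h'(v_1)=h_i^e(v_1)$, since that would force $h_i^e(v_N)\le h'(v_N)$, contradicting the strictness on the right endpoint. Hence $h'(v_1)>h_i^e(v_1)$. Let $\hat{h}$ be the ET occurrence starting at $h'(v_1)$; applying the sub-claim to $\hat{h}$ and $h'$ gives $\hat{h}(v_N)\le h'(v_N)\le h_i^e(v_N)$. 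Since $\hat{h}(v_1)=h'(v_1)>h_i^e(v_1)$, we have $h_i^e<_\star \hat{h}$, so $\hat{h}=h_j^e$ for some $j\ge i+1$. Remark~\ref{remark:4-1} then yields $h_{i+1}^e(v_N)\le h_j^e(v_N)\le h_i^e(v_N)$, and combining with the reverse inequality $h_i^e(v_N)\le h_{i+1}^e(v_N)$ (also from Remark~\ref{remark:4-1}) closes the argument.

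The main obstacle is the $(\Rightarrow)$ direction: the witnessing occurrence $h'$ need not itself be earliest transiting, and even its ``ET-ification'' $\hat{h}$ need not be $h_{i+1}^e$ — it could be a later $h_j^e$. The key trick is that the sub-claim lets us use $\hat{h}$ as a bridge between $h'$ and $h_{i+1}^e$, collapsing the chain $h_{i+1}^e(v_N)\le h_j^e(v_N)\le h_i^e(v_N)$ against the opposite inequality from Remark~\ref{remark:4-1}.
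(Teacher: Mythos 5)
Your proof is correct and follows essentially the same route as the paper's: the easy direction from Remark~\ref{remark:4-1}, and for the converse, showing the witnessing occurrence in the proper sub-window must start strictly after $h_i^e(v_1)$ and then sandwiching $h_{i+1}^e(v_N)$ using the ET dominance property (your sub-claim is just the special case of Lemma~\ref{lemma:power} with equal starting points). The only cosmetic difference is that you bridge through the ET occurrence $\hat{h}=h_j^e$ starting at $h'(v_1)$ and invoke monotonicity of end times in the index, whereas the paper compares $h_{i+1}^e$ to the witness directly via $h_{i+1}^e(v_1)\leq h'(v_1)$.
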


\begin{proof}
The `if' part follows easily from Remark~\ref{remark:4-1}. 
For the `only if' part, let us denote by $w=[n_s,n_e]=[h_i^e(v_1),h_i^e(v_N)]$ the 
window of $h_i^e$. Given that $w$ is not a minimal window, we need to show that 
$h_i^e(v_N)=h_{i+1}^e(v_N)$. Since $w$ is not a minimal window, one of its proper sub-windows contains an occurrence, say, $h$, of this episode. That means if $h$ starts at $n_s$ then it must 
 end before $n_e$.  But, since   
 $h_i^e$ is earliest transiting, 
 any occurrence starting at the same event as $h_i^e$ can not end before $h_i^e$. 
  Thus we must have $h(v_1) > h_i^e(v_1)$. 
This means, by lemma~\ref{lemma:power}, 
 since $h_i^e$ is earliest transiting, we can not have $h_i^e(v_N) > h(v_N)$. 
Since the window of $h$ has to be contained in the window of $h_i^e$, we thus have $h_i^e(v_N) = h(v_N)$. 
By definition, $h_{i+1}^e$ will start at the earliest possible position after $h_i^e$. Since there 
is an occurrence starting with $h(v_1)$ we must have $h_{i+1}^e(v_1) \leq h(v_1)$. Now, since  
$h_{i+1}^e$ is earliest transiting, it can not end after $h$. Thus we must have  
$h_{i+1}^e(v_N) \leq h(v_N)$. Also,  $h_{i+1}^e$ 
can not end earlier than $h_i^e$ because both are earliest transiting. Thus, we must have 
 $h_i^e(v_N) = h_{i+1}^e(v_N)$. This completes proof of lemma. 
\end{proof} 


\begin{remark}
\label{remark:4-2}
This lemma shows that any ET
 occurrence $h_i^e$ such that $h_i^e(v_N)< h_{i+1}^e(v_N)$ is a minimal occurrence. 
 The converse is also true. Consider a minimal window   
$w=[n_s, n_e]$. Since this is a minimal window, there is an occurrence (and hence an ET occurrence) 
 starting at $n_s$. Denote this ET occurrence by $h_i^e$.   
We know $h_i^e(v_N)=n_e$ because $w$ is a minimal  
window. Then the next ET occurrence $h_{i+1}^e$ has to start 
after $n_s$ and has to end beyond $n_e$ because $w$ is minimal. Thus we have  $h_i^e(v_N)<h_{i+1}^e(v_N)$.
\end{remark}

%

Now we are ready to prove correctness of the MO algorithm. Consider {\em Algorithm \ref{algo:unified}} 
 operating in the MO(minimal occurrence) mode for tracking occurrences of  
 an $N$-node episode $\alpha$.  
Since TRANSIT is always true in the MO mode, all automata would be tracking ET occurrences. 
Since COPY-AUTOMATON is true in MO mode whenever an automaton transits out of start state, we will 
always have an automaton in the start state. {\em This, along with the fact that  
 TRANSIT is always true, 
 implies that the $i^{th}$ initialized automaton would be tracking $h_i^e$,  
 the $i^{th}$ ET occurrence.} 
Let us denote by $\scrA_i^{\alpha}$ the $i^{th}$ initialized automaton.  
However, since JOIN-AUTOMATON is also always true, not all automata (initialized for this episode) 
would result in incrementing the frequency; some of them would be removed when one automaton transits  
 into a state already occupied by some other automaton. In view of Lemma~\ref{lemma:minimal_et} and 
Remark~\ref{remark:4-2}, if we show that the automaton $\scrA_i^{\alpha}$ results in increment of 
frequency if and only if $h_i^e$, the occurrence tracked by it, is such that  
$h_i^e(v_N) < h_{i+1}^e(v_N)$, then, the proof of correctness of MO algorithm is complete.

\begin{lemma}
\label{lemma:minimal_algorithm}
In the MO algorithm the $i^{th}$
automaton that was initialized for $\alpha$, referred to as $\scrA^\alpha_i$, contributes to the frequency 
count iff $h_i^{e}(v_N)<h_{i+1}^{e}(v_N)$.
\end{lemma}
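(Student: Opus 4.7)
The plan is to characterize directly when $\scrA_i^\alpha$ survives to its final state under MO mode. First I would confirm the basic invariant (already noted by the authors) that $\scrA_i^\alpha$, while alive, sits in state $k$ exactly during the half-open interval $[h_i^e(v_k), h_i^e(v_{k+1}))$. Hence its only mode of premature death is via JOIN-AUTOMATON: some later automaton $\scrA_j^\alpha$ with $j>i$ must transit into state $k$ at a time $t = h_j^e(v_k)$ strictly inside $(h_i^e(v_k), h_i^e(v_{k+1}))$, where the strict lower bound uses Remark~\ref{remark:4-1} ($h_j^e(v_k)>h_i^e(v_k)$ since $h_j^e(v_1)>h_i^e(v_1)$).

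For the $(\Leftarrow)$ direction, assume $h_i^e(v_N)<h_{i+1}^e(v_N)$. Suppose for contradiction that some $\scrA_j^\alpha$ retires $\scrA_i^\alpha$ at state $k$ via the mechanism above; then $h_j^e(v_k)<h_i^e(v_{k+1})$. Since $h_j^e(v_{k+1})$ is the first $\alpha[k+1]$ after $h_j^e(v_k)$, we get $h_j^e(v_{k+1}) \leq h_i^e(v_{k+1})$; combined with the reverse inequality from Remark~\ref{remark:4-1} this yields $h_j^e(v_{k+1})=h_i^e(v_{k+1})$, and iterating forward forces $h_j^e(v_N)=h_i^e(v_N)$. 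Sandwiching, $h_i^e(v_N)\leq h_{i+1}^e(v_N)\leq h_j^e(v_N)=h_i^e(v_N)$ contradicts the hypothesis. Hence $\scrA_i^\alpha$ is never retired, reaches its final state, and increments the count.

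For the $(\Rightarrow)$ direction I would proceed by backward induction on $i$ over the finite list of ET occurrences, with the largest ET index as base case (which lies vacuously in $S := \{\ell : h_\ell^e(v_N) < h_{\ell+1}^e(v_N)\}$). Assume $h_i^e(v_N)=h_{i+1}^e(v_N)$, and let $j^*$ be the largest index with $h_{j^*}^e(v_N)=h_i^e(v_N)$; then $j^*\geq i+1$ and $j^*\in S$, so by the inductive hypothesis $\scrA_{j^*}^\alpha$ reaches its final state. Let $k^*\geq 2$ be the smallest index with $h_i^e(v_{k^*})=h_{j^*}^e(v_{k^*})$, which exists because $h_i^e(v_1)<h_{j^*}^e(v_1)$ while $h_i^e(v_N)=h_{j^*}^e(v_N)$. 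At time $t^* = h_{j^*}^e(v_{k^*-1})$ the alive $\scrA_{j^*}^\alpha$ enters state $k^*-1$; the inequalities $h_i^e(v_{k^*-1}) < t^* < h_i^e(v_{k^*})$ imply that if $\scrA_i^\alpha$ is still alive at $t^*$ it must also be in state $k^*-1$, so JOIN-AUTOMATON retires it. Either way $\scrA_i^\alpha$ does not reach its final state.

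The main obstacle is the ``alive culprit'' issue: a naive direct argument using $\scrA_{i+1}^\alpha$ as the retiring automaton can fail, because $\scrA_{i+1}^\alpha$ itself may have been retired before the merger moment, so JOIN-AUTOMATON would not be triggered by it on $\scrA_i^\alpha$. The remedy above is to replace $\scrA_{i+1}^\alpha$ by the latest automaton $\scrA_{j^*}^\alpha$ sharing the same end time as $h_i^e$; membership $j^*\in S$ (secured via the induction hypothesis) guarantees that $\scrA_{j^*}^\alpha$ is alive along its entire trajectory, and in particular while passing through state $k^*-1$, thereby salvaging the retirement argument.
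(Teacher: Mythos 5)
Your proof is correct and follows essentially the same route as the paper's: the invariant that $\scrA^\alpha_i$ tracks $h_i^e$ and sits in state $k$ on $[h_i^e(v_k),h_i^e(v_{k+1}))$, the sandwich argument via Remark~\ref{remark:4-1}, and the forward propagation of equality between earliest transiting occurrences are exactly the ingredients of the published argument. The one substantive difference is in the direction ``$h_i^e(v_N)=h_{i+1}^e(v_N)$ implies $\scrA^\alpha_i$ is retired'': the paper asserts that some later automaton must transit into $\scrA^\alpha_i$'s state, implicitly taking $\scrA^\alpha_{i+1}$ as the culprit, whereas that automaton may itself have been retired before the merge point. Your device of passing to $j^*$, the last ET occurrence sharing $h_i^e$'s final event, whose automaton is guaranteed to survive by the already-established $(\Leftarrow)$ direction, closes this gap cleanly --- though the backward-induction scaffolding is superfluous, since the $(\Leftarrow)$ direction applies to $j^*$ directly. (A minor quibble: your claim that the retiring automaton enters state $k$ \emph{strictly} after $h_i^e(v_k)$ does not follow from $h_j^e(v_1)>h_i^e(v_1)$ alone --- two ET occurrences can coincide from position $k\geq 2$ onward and transit simultaneously --- but your argument never uses the strict lower bound, only the upper bound $h_j^e(v_k)<h_i^e(v_{k+1})$, so nothing breaks.)
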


%

\begin{proof}
\[
\begin{array}{cl}
 &  \scrA^\alpha_i \textrm{ does not contribute to the frequency} \\
{\LrA} & \scrA^\alpha_i \textrm{ is removed by a more recently initialized automaton}\\
{\LrA} & \exists \,\ \scrA^\alpha_k, k>i, \  \textrm{ which transits into a
state } \,\\ 
& \,\textrm{already occupied by } 
 \scrA^\alpha_i  \, \\
{\LrA} & \exists k,j \, s.t.\, k>i,\ 1<j \leq N\, \mbox{and} \, h_i^{e}(v_j)=h_k^{e}(v_j). \\
{\LrA} & \exists j \, 1<j\leq N\, s.t. \, h_i^{e}(v_j)=h_{i+1}^{e}(v_j).\\
& \mbox{~~because, by Remark~\ref{remark:4-1}, for } \: k > i,  \\
&               h_i^e(v_j) \leq h_{i+1}^e(v_j) \leq h_k^e(v_j), \forall j \\
 {\LrA} & h_i^e(v_N) = h_{i+1}^e(v_N)
\end{array}
\]
The last step follows because both $h_i^e$ and $h_{i+1}^e$ are ET occurrences and hence 
 $h_i^e(v_j)=h_{i+1}^e(v_j)$ implies $h_i^e(v_{j'})=h_{i+1}^e(v_{j'}), \ \forall j'>j$.  

Conversely, we have 
\[
\begin{array}{cl}
 &   \scrA^\alpha_i \textrm{ contributes to the frequency} \\
{\LrA}& \forall j, \  1<j\leq N, \, h_i^{e}(v_j)<h_{i+1}^{e}(v_j) \\
{\LrA}& h_i^{e}(v_N)<h_{i+1}^{e}(v_N).
\end{array}
\]
The first step follows because, if $\scrA^\alpha_i$ contributes to the frequency then 
no automaton initialized after it would ever come to the same state occupied by it and since all 
occurrences tracked are earliest transiting, this must mean $h_i^{e}(v_j)<h_{i+1}^{e}(v_j)$,  
$\forall j$.   
This completes proof of the lemma.
\end{proof}

Another interesting observation is that if $h_i^{e}$ is minimal, then it is non-interleaved with $h_{i+1}^{e}$. 
 Suppose $h_i^{e}$ is minimal and $h_i^{e}$ is not non-interleaved with $h_{i+1}^{e}$.    
Since $h_i^{e}$ is minimal, we have $h_i^{e}(v_{j'})<h_{i+1}^{e}(v_{j'}), \ \forall j'$.  
If $h_i^e$ is not non-interleaved with $h_{i+1}^e$, there exists a $j<N$ such that  
$h_{i+1}^{e}(v_j) < h_i^{e}(v_{j+1})$.
Thus we must have $h_i^e(v_j) < h_{i+1}^{e}(v_j) < h_i^{e}(v_{j+1}) < h_{i+1}^e(v_{j+1})$. 
But this can not be because 
$E_{h_i^{e}(v_{j+1})}$ is the
earliest $\alpha[j+1]$ after $h_i^{e}(v_j)$ and if it is also after $h_{i+1}^{e}(v_{j})$ then   the fact that both $h_i^e$ and $h_{i+1}^e$ are ET occurrences should mean 
 $h_i^{e}(v_{j+1})=h_{i+1}^{e}(v_{j+1})$ which contradicts that $h_i^e$ is minimal. Hence $h_i^e$ and
$h_{i+1}^{e}$ are non-interleaved.

 Thus, given the sequence of  minimal windows, 
the earliest transiting occurrences from each of these minimal windows gives a sequence of
(same number of) non-interleaved occurrences.  
This leads to  $f_{mi}\leq f_{ni}$ as stated earlier in
(\ref{eq:all-f-relationships}).

\subsection{Other ET occurrences-based algorithms}
\label{subsec:proof-ETbased}

\subsubsection{Proofs of correctness for NO-X and NO-I}

The NO-X algorithm can be viewed as a slight modification to the MO algorithm.  
As in the MO algorithm, we always have an automaton in the start state and   
 all automata make transitions 
 as soon as possible and when an automaton transits into a state occupied by another, 
 the older one is removed. However, in the NO-X algorithm, the INCREMENT-FREQ
variable is true only when we have an occurrence satisfying $T_X$ constraint. Hence, to start with, we look for the first minimal occurrence which satisfies the expiry time constraint  
and increment frequency.  At this point, (unlike in the MO algorithm)  
we terminate all automata except the one in the start 
state since we are trying to construct a non-overlapped set of occurrences.  
 Then we look for the next earliest minimal occurrence (which will be non-overlapped 
 with the first one) 
satisfying expiry time constraint and so on.  Since minimal occurrences locally have 
the least time span, this strategy of searching for minimal occurrences satisfying expiry
 time constraint in a non-overlapped fashion is quite intuitive.
Let $H_{nX}=\{h_1^{nX},h_2^{nX}\ldots h_{f'}^{nX}\}$ denote the sequence of 
occurrences tracked by the  
  NO-X algorithm (for an $N$-node episode). Then the following property of $H_{nX}$ is obvious. 
\begin{property}
\label{property:algo3''}
$h_1^{nX}$ is the earliest minimal occurrence satisfying expiry time constraints. $h_i^{nX}$ is 
 the first minimal occurrence (satisfying expiry time constraint) that starts after  
 $h_{i-1}^{nX}(v_N)$. There is no 
minimal occurrence
satisfying expiry time constraint which starts after 
 $h_{f'}^{nX}(v_N)$.
\end{property}

\begin{theorem}
\label{theorem:maximality non-overlap constraints}
\em
$H_{nX}$ is a maximal non-overlapped sequence satisfying expiry time constraint $T_X$.
\end{theorem}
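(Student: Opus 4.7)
My plan is to prove maximality by a standard greedy/exchange argument against an arbitrary competitor. Let $H' = \{h'_1, h'_2, \ldots, h'_m\}$ be any non-overlapped set of occurrences of $\alpha$ in $\mathbb{D}$, each satisfying the expiry constraint, and order its elements by start time. Since $H'$ is non-overlapped and so ordered, we have $h'_{i-1}(v_N) < h'_i(v_1)$ for all $i$. I would then show, by induction on $i$, that $h_i^{nX}(v_N) \leq h'_i(v_N)$ for every $i \leq \min(m,f')$, and conclude from this that $m \leq f'$.

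The key auxiliary observation needed at every step is the following: if $h'$ is any occurrence of $\alpha$ with span at most $T_X$, then the window $[h'(v_1), h'(v_N)]$ contains a minimal window of $\alpha$, and any minimal occurrence $h^m$ sitting inside that window satisfies $h^m(v_1) \geq h'(v_1)$, $h^m(v_N) \leq h'(v_N)$, and therefore also satisfies the expiry constraint. For the base case $i=1$, pick such a minimal occurrence $h^m$ inside $h'_1$; since $h_1^{nX}$ is by Property \ref{property:algo3''} the \emph{earliest} minimal occurrence satisfying expiry, we get $h_1^{nX}(v_N) \leq h^m(v_N) \leq h'_1(v_N)$. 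For the inductive step, assuming $h_{i-1}^{nX}(v_N) \leq h'_{i-1}(v_N)$, the non-overlap property of $H'$ gives $h'_i(v_1) > h'_{i-1}(v_N) \geq h_{i-1}^{nX}(v_N)$. Hence the minimal occurrence inside $h'_i$ is a minimal occurrence satisfying expiry that starts strictly after $h_{i-1}^{nX}(v_N)$. Since Property \ref{property:algo3''} makes $h_i^{nX}$ the \emph{first} such, we obtain $h_i^{nX}(v_N) \leq h^m(v_N) \leq h'_i(v_N)$.

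To finish, I would argue by contradiction that $m \leq f'$. If $m > f'$, then $h'_{f'+1}$ exists and, by the non-overlap of $H'$ together with the inductive conclusion at $i = f'$, satisfies $h'_{f'+1}(v_1) > h'_{f'}(v_N) \geq h_{f'}^{nX}(v_N)$. Then the minimal occurrence contained in $h'_{f'+1}$ is a minimal occurrence of $\alpha$ satisfying the expiry constraint whose start is strictly greater than $h_{f'}^{nX}(v_N)$, contradicting the last clause of Property \ref{property:algo3''}. Hence $|H_{nX}| \geq |H'|$ for every such $H'$, establishing maximality.

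The main obstacle I anticipate is the auxiliary claim that any expiry-satisfying occurrence contains a minimal occurrence that also satisfies expiry and starts no earlier. This is intuitive but needs a careful argument to extract a minimal sub-window from an arbitrary one while controlling the start time; once it is in hand the rest is bookkeeping. A secondary subtlety is the strict-versus-non-strict inequality handling when invoking non-overlap, which under the assumption of distinct event times (stated at the start of Section \ref{sec:proof-of-correctness}) reduces cleanly to the strict form used above.
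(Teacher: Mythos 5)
Your proof is correct and follows essentially the same greedy exchange argument as the paper, establishing $h_i^{nX}(v_N)\le h'_i(v_N)$ inductively against an arbitrary non-overlapped, expiry-satisfying competitor and then concluding maximality. The auxiliary claim you flag as the main obstacle is exactly what the paper extracts from its earlier lemmas: the earliest transiting occurrence starting at $h'(v_1)$ ends no later than $h'(v_N)$ by Lemma~\ref{lemma:power}, and among all earliest transiting occurrences ending at that same event the lexicographically last one is a minimal occurrence by Lemma~\ref{lemma:minimal_et}, so its window is contained in that of $h'$ and inherits both the start-time bound and the expiry constraint.
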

\begin{proof}
Consider any other set of non-overlapped occurrences satisfying expiry constraints, 
$H'$ = $\{h'_1,h'_2\ldots h'_l\}$ such that $h'_i <_\star h'_{i+1}$. 
Let $m\,=\,min\{f',l\}.$ Then we first show 
\[ h_i^{nX}(v_N)\leq h'_i(v_N)\qquad \forall i\,=\,1,2,\ldots m.\]
  Suppose $h'_1(v_N) < h_1^{nX}(v_N)$. Consider the earliest transiting
occurrence $h''$ starting from $h'_1(v_1)$. This ends at or before $h'_1(v_N)$ by lemma \ref{lemma:power}. Among all ET occurrences that end at the same event as $h''$, the last 
  one (under the lexicographic ordering)   
is a minimal occurrence by lemma \ref{lemma:minimal_et}. Its window is contained
in that of $h'_1$ which satisfies the expiry time constraint.  
Hence we have found a minimal occurrence satisfying expiry constraint 
ending before $h_1^{nX}$ which contradicts the first statement of property \ref{property:algo3''}. Hence $h_1^{nX}(v_N)\leq h'_1(v_N)$. 
Now applying the same argument to the data stream starting with the first event 
 after $h_1^{nX}(v_N)$, we get $h_2^{nX}(v_N)\leq h'_2(v_N)$ and so on and thus can 
conclude  $h_i^{nX}(v_N)\leq h'_i(v_N)\,\, \forall i$. This shows that no other set of 
non-overlapped occurrences can have more number of occurrences than those in $H_{nX}$. 
 Hence, $H_{nX}$ is maximal.
\end{proof}

If we choose $T_X$ equal to the time span of the data stream, 
the NO-X algorithm reduces to the NO-I algorithm because every 
occurrence satisfies expiry constraint. Hence proof of correctness of NO-I algorithm is immediate. 

\subsubsection{Relation between NO-I and NO algorithms}
We now explain the relation between the sets of occurrences tracked  
by the NO and NO-I algorithms. 
 As proved in \cite{LSU07} the NO algorithm (which uses one automaton per episode),  
tracks a maximal non-overlapped sequence of
occurrences, say, $H_{no}=\{h_1^{no},h_2^{no}\ldots h_{f_{no}}^{no}\}$. Since the 
NO-I algorithm has no expiry time constraint, it also tracks a maximal set of 
 non-overlapped occurrences. 
Among all the ET occurrences that end at  $h_i^{no}(v_N)$, let  
 $h_i^{in}$ be the last one (as per the lexicographic ordering). Then the 
$i^{th}$ occurrence tracked by the NO-I algorithm would be $h_i^{in}$ as we show now. 
Since $h_1^{no}$ would be the first ET occurrence, it is clear from our discussion 
in the previous subsection that the first occurrence tracked by the MO algorithm would be 
$h_1^{in}$. As is easy to see, the MO and NO-I algorithms would be identical till the 
first time an automaton reaches the accepting state. Hence $h_1^{in}$ would be the 
first occurrence tracked by the NO-I algorithm. Now the NO-I algorithm would remove 
all automata except for the one in the start state. Hence, it is as if we start the algorithm 
with data starting with the first event after $h_1^{no}(v_N) = h_1^{in}(v_N)$. Now, 
by the property of NO algorithm, $h_2^{no}$ would be the first ET occurrence in this 
data stream and hence $h_2^{in}$ would be the first minimal window here. Hence it is the 
second occurrence tracked by NO-I and so on. 

The above also shows that each occurrence tracked by the NO-I algorithm is also tracked by 
the MO algorithm and hence we have $f_{no} \leq f_{mi}$ as stated 
 in~(\ref{eq:all-f-relationships}). 
$H_{in}$ is also a maximal set of non-overlapping minimal windows as discussed  
in~\cite{tatti09}.

\subsection{Non-interleaved and Distinct Occurrences based Algorithms}
\label{subsec:proof-nonETbased}
The algorithm NI which counts non-interleaved occurrences is different from all the ones  
discussed so far because it does not track ET occurrences. Here also we always have an 
 automaton waiting in the start state. However,  the
transitions are conditional in the sense that the $i^{th}$ created automaton makes a transition from state $(j-1)$ to $j$
provided the $(i-1)^{th}$ created automaton is past state $j$ after processing the current event.  
 This is  because we want
the $i^{th}$ automata to track an occurrence non-interleaved with the occurrence tracked by $(i-1)^{th}$ automaton.  
 Let $\scrH_{ni} = \{ h_1^{ni}, h_2^{ni},\dots
h_{f'}^{ni} \}$ be the sequence of occurrences tracked by NI. From the above discussion 
 it is clear that it has the following property (while counting occurrences of $\alpha$). 

\begin{property}
\label{property:algo1'''}
\em
$h_1^{ni}$ is the first or earliest occurrence (of $\alpha$).  
For all $i>1$ and $\forall j=1,\ldots,N-1$, $h_i^{ni}(v_j)$ is the first   
 occurrence of $\alpha[j]$ at or
after $h_{i-1}^{ni}(v_{j+1})$; and  $h_i^{ni}(v_N)$ is the earliest occurrence of $\alpha[N]$ after $h_i^{ni}(v_{N-1})$. There is no occurrence of $\alpha$ beyond 
$h_{f'}^{ni}$ which is non-interleaved with it.
\end{property}

The proof that $H_{ni}$ is a maximal non-interleaved sequence is very similar   
 in spirit to that of the NO-X algorithm. As earlier, we can show that given  
 an arbitrary sequence of non-interleaved occurrences 
$H'$ = $\{h'_1,h'_2\ldots h'_l\}$, we have $ h_i^{ni}(v_k)\leq h'_i(v_k), \  \forall i, k$  
 and hence get the correctness proof of NI algorithm. 
It is easy to verify the correctness of the DO algorithm also along 
 similar lines. 

It appears difficult to extend both the NI and DO algorithms to incorporate  
expiry time constraints. 
 For this we should track a set of occurrences
$h_1,h_2\ldots$ of $\alpha$, where $h_1$ is the first occurrence satisfying $T_X$ 
 and $h_2$ is the next earliest occurrence satisfying $T_X$ that is
non-interleaved with (or distinct from, in case of DO) $h_1$ and so on. 
Note that this $h_2$ need not have to be the earliest occurrence  non-overlapped
with $h_1$. 
At present, there are 
no algorithms 
for counting non-interleaved or distinct occurrences satisfying an expiry time 
 constraint.

Before ending this section, we 
 briefly outline what needs to be done when the data stream contains multiple events having the same time of occurrence. An
important thing to note is that two events having the same time of occurrence cannot be a part of a serial episode occurrence. Hence,
each automata can at most accept one event from a set of events having the same occurrence time. With this condition, the DO, AO and HD
algorithms go through as before. One would need to process the set of
events having the same occurrence time together and  allow all the permissible automata  to make a one step transition 
first as done using $transitions()$ list in \cite{MTV97}. After this, before processing the set of events with the next occurrence time, we would need to do the multiple automata check
for the various candidate episodes and delete the appropriate older automata for 
algorithms MO, MO-X, NO-I and NO-X.  For the non-interleaved algorithm, one needs to actually back track the transitions which
resulted in two automata to coalesce.

\section{Candidate Generation}
\label{sec:candgen}
In this section, we discuss the anti-monotonicity properties of the various frequency counts, which in-turn are exploited by their respective candidate generation
steps in the Apriori-style level-wise procedure for frequent episode discovery.

It is well known that the windows-based\cite{MTV97}, non-overlapped\cite{LSU05} and total\cite{ITN04} frequency measures satisfy the 
anti-monotonicity property that
{\em all subepisodes of a frequent episode are frequent}. One can verify that the same holds for the distinct occurrences based
frequency too. It has been pointed out in \cite{ITN04} that the head frequency does not
satisfy this anti-monotonicity property. For an episode
$\alpha$, in general, only the subepisodes involving $\alpha[1]$ are as frequent as $\alpha$ under the head count. In a level-wise 
apriori-based
episode discovery, the candidate generation for the head frequency count would exploit the condition that if an $N$-node episode is 
frequent, then all $(N-1)$-node 
subepisodes that include $\alpha[1]$ have to be frequent. 
 The head frequency definition has some 
 limitations in the sense that the frequency of the $(N-1)$-node  
 suffix subepisode\footnote{Given an $N$-node episode 
 $\alpha[1] \rightarrow \alpha[2] \rightarrow \cdots \rightarrow \alpha[N]$,  
its $K$-node {\em prefix subepisode} is 
 $\alpha[1] \rightarrow \alpha[2] \rightarrow \cdots \rightarrow \alpha[k]$
 and its $(N-K)$-node suffix subepisode is  
 $\alpha[K+1] \rightarrow \alpha[K+2] \rightarrow \cdots \rightarrow \alpha[N]$ 
 for $K=1, 2, \cdots, (N-1)$.}
can be  arbitrarily low. Consider the event
stream with $100$ $A$s followed by a $B$ and $C$. Suppose all occurrences of $A\rA B\rA C$ satisfy the expiry constraint $T_X$. Even
though there are $100$ occurrences of $A\rA B \rA C$, there is only one occurrence 
  of $B\rA C$. This can be a problem when one desires that the frequent episodes 
 capture repetitive causative influences.


Like the head frequency, 
the minimal occurrences (windows) and the non-interleaved occurrences  
also do not satisfy the anti-monotonicity property
that all subepisodes are at least as frequent as the corresponding episode.  
However, the $(N-1)$-node prefix and suffix subepisodes are at least  
as frequent as the episode as we show below. For an example, consider a data stream where 
successive events are given by $ABACBDCD$.  
 Even though there are two minimal windows (and two non-interleaved occurrences)   
of $A\rA B \rA C\rA D$, there is only one 
minimal window (and one non-interleaved occurrence) of  each of the non-prefix  
and non-suffix subepisodes 
$A\rA B\rA D$ and $A\rA C\rA D$.  However, the situation here is not as bad as 
 that for head frequency because all such subepisodes will have at least as many 
distinct occurrences as the number of minimal or non-interleaved occurrences 
of the episode, at least in case of injective episodes. (Note that this example 
is that of an injective episode). This is because, in case of injective episodes, 
 the number of distinct occurrence is always greater than the non-interleaved count,
which in-turn is greater than the minimal windows count. 
Hence, given that there are $f$ non-interleaved or minimal occurrences of  
an injective episode $\alpha$,
there are at least $f$ distinct occurrences of $\alpha$ too. 
Since the distinct occurrences based
frequency satisfies the original anti-monotonicity
property, all subepisodes of $\alpha$ too will have at least $f$ distinct occurrences.

We now formally prove the
anti-monotonicity property for minimal and non-interleaved occurrences based frequencies. 
\begin{theorem}
If a N-node serial episode $\alpha$ has a frequency $f$ in the minimal or the non-interleaved sense, then its 
$(N-1)$-prefix subepisode $(\alpha_p)$ and suffix subepisode $(\alpha_s)$ 
have a frequency of at least $f$.  
\end{theorem}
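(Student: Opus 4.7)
The plan is to handle the minimal-window case and the non-interleaved case separately, since they require rather different constructions, but within each case the suffix argument mirrors the prefix via a left/right symmetry.

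For non-interleaved occurrences, I would proceed by direct projection. Taking the $f$ non-interleaved occurrences $h_1,\ldots,h_f$ of $\alpha$ in lexicographic order, define $h_i^p$ to be the restriction of $h_i$ to the first $N-1$ nodes and $h_i^s$ the restriction to the last $N-1$ nodes. The non-interleaved condition $h_{i+1}(v_j) \geq h_i(v_{j+1})$ for $j=1,\ldots,N-1$ restricts cleanly to the index ranges required for $\alpha_p$ (use $j=1,\ldots,N-2$) and, after re-indexing, for $\alpha_s$ (the same inequalities for $j=2,\ldots,N-1$), so each projected sequence is still pairwise non-interleaved. Distinctness of the projections within each sequence is automatic because non-interleaving already forces $h_{i+1}(v_1) \geq h_i(v_2) > h_i(v_1)$, so starting times remain strictly increasing after projection.

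For minimal windows, the idea is to produce, from each minimal window $[s_i,e_i]$ of $\alpha$, a distinct minimal window of $\alpha_p$ and one of $\alpha_s$. For $\alpha_p$, I would use the earliest transiting occurrence $h_i^e$ of $\alpha$ starting at $s_i$; since $[s_i,e_i]$ is minimal, $h_i^e$ must end at $e_i$. Its first $N-1$ events give an occurrence of $\alpha_p$ in the window $[s_i,h_i^e(v_{N-1})]$, and I would show this window is minimal by contradiction: if some proper sub-window $[s',e']$ contained an occurrence of $\alpha_p$, then either $s' > s_i$ (in which case appending the first $\alpha[N]$ after $e'$, which must occur at or before $e_i$, produces an occurrence of $\alpha$ inside a proper sub-window of $[s_i,e_i]$), or $s'=s_i$ and $e' < h_i^e(v_{N-1})$, which contradicts Lemma~\ref{lemma:power} applied to the earliest transiting $h_i^e$. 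Symmetrically, for $\alpha_s$ I would employ a ``latest transiting'' occurrence $h_i^{lt}$ of $\alpha$ ending at $e_i$ (forcing $h_i^{lt}(v_1)=s_i$ by minimality), take the window $[h_i^{lt}(v_2),e_i]$, and in any hypothetical smaller sub-occurrence of $\alpha_s$ prepend the latest $\alpha[1]$ before its start (which lies in $[s_i,s')$ since $s_i$ itself is such an $\alpha[1]$) to reach the same contradiction. Distinctness across $i$ is immediate since the $s_i$ (equivalently $e_i$) of minimal windows of $\alpha$ are strictly increasing.

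The main obstacle will be the minimality verification of the constructed sub-windows, specifically the boundary cases $s'=s_i$ or $e'=e_i$ that cannot be ruled out by merely shrinking $[s_i,e_i]$. These force me to invoke the asymmetric ET / LT structure of $h_i^e$ and $h_i^{lt}$, and this is the only place where the suffix argument genuinely needs a left-right symmetric analogue of Lemma~\ref{lemma:power} rather than the lemma as stated.
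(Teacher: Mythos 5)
Your proposal is correct and takes essentially the same route as the paper: for non-interleaved occurrences it projects a maximal non-interleaved set onto the first (resp.\ last) $N-1$ nodes and observes the pairwise conditions restrict, and for minimal windows it builds, from each minimal window of $\alpha$, the earliest-transiting prefix occurrence anchored at the left end (resp.\ the latest suffix occurrence anchored at the right end) and rules out proper sub-windows via Lemma~\ref{lemma:power} on one side and minimality of the original window on the other. Your write-up is somewhat more explicit than the paper's about the boundary cases and about needing the left--right mirror of Lemma~\ref{lemma:power} for the suffix, but the underlying argument is the same.
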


\begin{proof}
Consider a minimal window of the episode $\alpha$,
 $w = [n_{s}, n_{e}]$. Consider the earliest occurrence $h_p$ of the prefix 
subepisode starting from $n_{s}$ and let $w'$ be its window.   Any
proper sub-window of $w'$ starting at $n_{s}$ and containing an occurrence of $\alpha_p$  contradicts lemma \ref{lemma:power}. A proper sub-window of $w'$ containing an occurrence of $\alpha_p$ starting after 
$n_{s}$ would contradict
the minimality of $w$ itself. Hence $w'$ is a minimal window of $\alpha_p$ starting at $n_{s}$.  
We hence conclude that $\alpha_p$ has a frequency of at least $f$. 
A similar proof works for the suffix subepisode by considering the  
 window of the last occurrence $h_s$ of the suffix subepisode ending at $n_{e}$. 

 Let $\scrH_{ni}=\{h_1,h_2,\dots h_f\}$ be a maximal non-interleaved sequence.  
 From each occurrence $h_k$, we choose the 
sub-occurrence $h'_k=[h_k(v_1), h_k(v_2),\dots h_k(v_{N-1}]$, of $\alpha_p$.  It is easy to see that this new
sequence of occurrences ${h'_1, h'_2, \dots h'_f}$ forms a non-interleaved sequence. Hence the frequency of $\alpha_p$ is at least
$f$. A similar argument works for the suffix episode. 
\end{proof}

Hence, for every episode $\alpha$, we extract its $(N-1)$ suffix, go down the candidate list and search for a block of
episodes whose $N-1$ prefix matches this suffix. We form candidates as many as the number of episodes in this matching block. This
kind of candidate generation has already been reported in the literature in \cite{SA96}, \cite{OPS04} and \cite{PSU08}   in the 
context  of
sequences under inter-event time
constraints.

\section{Discussion and Conclusions}
\label{sec:discussion}
The framework of frequent episodes in event streams is a very useful  
data mining technique for unearthing temporal dependencies from data streams in 
many applications. The framework is about a decade old and many different frequency 
measures and associated algorithms have been proposed over the last ten years. In this 
paper we have presented a generic automata-based algorithm for obtaining frequencies 
of a set of candidate episodes. This method unifies all the known algorithms in the sense 
that we can particularize our algorithm (by setting values for a set of variables) for 
counting frequent episodes under any of the frequency measures proposed in literature. 

As we showed here, this unified view gives useful insights into the kind of occurrences  
counted under different frequency definitions and thus also allows us to prove relations 
between frequencies of an episode under different frequency definitions. Our view also 
allows us to get correctness proofs for all algorithms. We introduced the notion of 
earliest transiting occurrences and, using this concept, are able to get simple proofs of 
correctness for most algorithms. This has also allowed us to understand the kind of  
anti-monotonicity properties satisfied by different frequency measures. 

While the main contribution of this paper is this unified view of all frequency counting   
algorithms, some of the specific results presented here are also new.  The relationships between 
 different frequencies of an episode (cf. eqn~\ref{eq:all-f-relationships}), is proved here for the 
 first time.   The distinct-occurrences 
based frequency and an automata-based algorithm for it are novel. 
 The specific proof of correctness presented here for minimal occurrences is also novel.
 Also, the correctness proofs for non-overlapped  
occurrences based frequency counting under expiry time constraint has been provided here for the first time.

In this paper we have considered only the case of serial episodes. This is because, 
 at present, there are no algorithms for discovering general partial orders under the  
 various frequency definitions. However, all 
 counting algorithms explained here for serial episodes can be extended to episodes   
 with a general partial order structure.
 We can come up with a similar finite state automata(FSA) which track the   
 earliest transiting occurrences of an episode with a general partial order
structure \cite{ALRS09}. For example, consider a partial order episode $(A\,B)\rA C$   
 which represents $A$ and $B$ occurring in any order followed by a $C$. 
In order to track an
occurrence of such a pattern, the initial state has to wait for either of $A$ and $B$.  
On seeing an $A$ it goes to  state-1 where it waits only for a $B$; on the other hand, on seeing 
 a $B$ first 
it moves to  state-2 where it  waits only for an $A$. Then on seeing a $B$ in state-1 or  
 seeing a $A$ in state-2 it moves into state-3 where it waits for a $C$ and so on.  
 Thus, in each state in such a FSA, in general, we wait for any of a set of event types 
(instead of a single event for serial episodes) and a given state will now branch out into   
 different  states on different event types.  With such a FSA technique it is possible to 
 generalize the method presented here so that we have algorithms for counting frequencies of  
 general partial order episodes under different frequencies. 
 The proofs presented here for serial episodes can also be extended  
 for general partial order episodes. While it seems possible, as explained above, to generalize  
 the counting schemes to handle general partial order episodes, it is not obvious  what  
 would be an appropriate candidate generation scheme for general partial order episodes 
 under different frequency definitions. This is an important direction for future work.

In this paper, we have considered only expiry time constraint which prescribes  
 an upper bound on the  span of the occurrence. It would be interesting to see under what  
 other time constraints (e.g., gap constraints),   
 design of counting algorithms under this generic framework is possible.    
 Also, some unexplored choice of the boolean conditions in the proposed generic 
 algorithm may give rise to algorithms for new useful frequency measures.  This is also a useful 
 direction of research to explore.

\bibliographystyle{plain}

\end{document}